\definecolor{newcolor}{rgb}{.8,.349,.1}
\newcommand{\be}{\begin{equation}}
\newcommand{\ee}{\end{equation}}
\newcommand{\sset}[1]{\left\{{#1}\right\}}
\newcommand{\prob}[1]{\hbox{Pr}\left\{{#1}\right\}}
\newcommand{\vmin}{v_{*}} 
\newtheorem{theorem}{Theorem}
\journal{Pattern Recognition Letters}
\begin{document}

\ifpreprint
  \setcounter{page}{1}
\else
  \setcounter{page}{1}
\fi

\begin{frontmatter}

\title{Selecting a number of voters for a voting ensemble}

\author[1]{Eric Bax} 
\ead{ebax@verizonmedia.com}

\address[1]{ebax@verizonmedia.com\\Verizon Media, Playa Vista, CA, USA}

\begin{abstract}
For a voting ensemble that selects an odd-sized subset of the ensemble classifiers at random for each example, applies them to the example, and returns the majority vote, we show that any number of voters may minimize the error rate over an out-of-sample distribution. The optimal number of voters depends on the out-of-sample distribution of the number of classifiers in error. To select a number of voters to use, estimating that distribution then inferring error rates for numbers of voters gives lower-variance estimates than directly estimating those error rates. 
\end{abstract}

\end{frontmatter}


\section{Introduction}
Voting ensembles of classifiers are a staple of machine learning, including bagging \citep{breiman96}, boosting \citep{schapire90,freund97}, forests of decision trees \citep{ho95,ho98,breiman01}, and stacking \citep{wolpert92}. Comparative studies show that ensemble classifiers are often the best types of out-of-the-box classifiers \citep{li10}, they win many machine learning competitions \citep{bennett07,chen16}, and they continue to solve practical problems \citep{zheng19,jowua13}. For a compelling explanation why ensemble classifiers produce good performance, refer to \citep{dietterich00}. For more on selecting ensemble classifiers and ensemble size, refer to \citep{bonab19,jackowski18,gomes17,lobato13,oshiro12,yang11,rokach10,rokach09,tsoumakas08,kuncheva04,kuncheva03,liu04,hu01,bax_voting,lam97}.

This paper focuses on equally-weighted voting. One extreme is to use all classifiers as voters for every classification. The other is to select a single classifier at random for each classifier. This is sometimes called Gibbs classification. PAC-Bayes error bounds \citep{mcallester99,langford01,begin16} indicate why Gibbs classification can be effective -- selecting a Gibbs ensemble that includes 1\% of the hypothesis classifiers produces error bounds that are similar to selecting a classifier from a hypothesis set with only 100 classifiers, even if the actual hypothesis set has an arbitrarily large number of classifiers. 

Similar to \cite{esposito04}, this paper explores how to select a number of voters for a majority-vote ensemble classifier. That paper shows that the distribution of the number of classifier errors can vary widely over examples in empirical datasets, motivating our analysis of the influence of ensemble size on error rates in such situations. That paper focuses on ensembles of classifiers selected with replacement, allowing a potentially unlimited number of voters. In contrast, this paper focuses on ensembles of classifiers selected without replacement, leading to a different conclusion about the optimal ensemble size, and a statistical method to select the ensemble size based on simultaneous bounds on frequencies of the numbers of classifiers in error. 

The next section shows that selecting a single classifier at random for each example can outperform voting over all ensemble classifiers. In Section \ref{section_basis}, we show how the distribution of number of ensemble classifiers in error impacts the optimal number of voters, by considering the error curves over numbers of voters for each number of classifier errors as a basis for all possible distribution error curves over numbers of voters. In Section \ref{section_optimal}, we prove that any number of voters may be optimal. Section \ref{section_select} discusses methods to select the number of voters, showing that an out-of-sample error estimate based on inference has less variance than direct estimates. Then Section \ref{section_inf_val} outlines methods to compute out-of-sample error bounds based on inference estimates, including a discussion of challenges for the future. 

\section{Does voting always help?}
To begin, compare worst-case error rate for majority voting over all classifiers to selecting a single classifier at random for each example. Let $m$ be the number of classifiers in the ensemble, and let $p$ be their average error rate. Then the single-classifier strategy has error rate $p$, by linearity of expectations. For the all-voting strategy, the out-of-sample error rate depends on patterns of agreement among the classifiers as well as their out-of-sample error rates. For example, suppose three classifiers each have a 10\% error rate. Then, for each pair of classifiers, it is possible that they err together (and the classifier outside the pair is correct) with probability 5\%, producing a 15\% voting error rate. In general,

\begin{theorem}
For $m$ classifiers in an ensemble, with $m$ odd, if the average out-of-sample classifier error rate is $p$, with $p < \frac{1}{2}$, then the maximum possible all-voting error rate is
\be 2 p \left(1 - \frac{1}{m+1}\right). \ee
\end{theorem}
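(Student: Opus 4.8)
The plan is to recast the all-voting error rate as the upper-tail probability of a single integer-valued quantity and then bound that tail using only its mean. For each example, let $k$ denote the number of ensemble classifiers that err, and let $q_k$ be the out-of-sample probability that exactly $k$ classifiers err, so that $q_k \ge 0$ and $\sum_{k=0}^m q_k = 1$. Since $m$ is odd, the majority vote is wrong precisely when a strict majority err, i.e. when $k \ge t$ with $t = \frac{m+1}{2}$. Hence the all-voting error rate is $\sum_{k \ge t} q_k$, and the task becomes: over all distributions $(q_k)$ consistent with the hypothesis, maximize this upper-tail mass.

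First I would write down the one linear constraint the hypothesis supplies. By linearity of expectation the expected number of classifiers in error equals the sum of the individual error probabilities, namely $mp$; therefore $\sum_{k=0}^m k\, q_k = mp$. So the problem is a small linear program: maximize $\sum_{k \ge t} q_k$ subject to $\sum_k q_k = 1$, $\sum_k k\, q_k = mp$, and $q_k \ge 0$. Next I would prove the upper bound by a Markov-type argument: every unit of probability mass at a value $k \ge t$ contributes at least $t$ to the mean, while mass below the threshold contributes at least $0$, so
\be mp = \sum_{k} k\, q_k \ge t \sum_{k \ge t} q_k. \ee
Dividing by $t = \frac{m+1}{2}$ gives $\sum_{k \ge t} q_k \le \frac{mp}{t} = 2p\left(1 - \frac{1}{m+1}\right)$, the claimed maximum. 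The hypothesis $p < \frac{1}{2}$ guarantees $mp < t$, so this bound lies below $1$ and is not vacuous.

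Finally I would exhibit a configuration attaining the bound. I would place mass $\alpha = \frac{2mp}{m+1}$ at $k = t$ and the remaining mass at $k = 0$; this saturates the inequality above, and $\alpha < 1$ follows from $p < \frac{1}{2}$. The step that needs the most care is realizability: $(q_k)$ must arise from an actual agreement pattern among $m$ classifiers of average error rate $p$, not merely from an abstract distribution on $\{0,\dots,m\}$. To realize it I would, whenever the $k=t$ outcome occurs, choose the erring subset uniformly at random among all size-$t$ subsets of the $m$ classifiers. By symmetry each classifier then errs with probability $\alpha \cdot \frac{t}{m} = p$, so the average error rate is exactly $p$ and the voting error rate equals $\alpha = 2p\left(1 - \frac{1}{m+1}\right)$.

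The main obstacle is this last realizability check rather than the tail bound itself: the upper bound holds for any distribution with the prescribed mean, but the theorem claims a \emph{maximum possible} error rate, so I must confirm that the extremal two-point distribution corresponds to a genuine ensemble. The symmetric uniform-subset construction resolves this cleanly, and I would note that only the mean $mp$ enters the bound, so allowing classifiers with unequal error rates averaging to $p$ changes nothing.
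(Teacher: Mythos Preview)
Your proof is correct and follows essentially the same route as the paper: both identify the extremal two-point distribution concentrated at $k=0$ and $k=\frac{m+1}{2}$ and compute $\hat p = \frac{2mp}{m+1}$. Your version is in fact more careful than the paper's, since you justify optimality via an explicit Markov-type inequality and you verify realizability with the uniform-random-subset construction, whereas the paper simply asserts which configuration is extremal.
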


\begin{proof}
All-voting error is maximized by having the smallest possible majority of voters in error be as probable as the average error bound allows, and otherwise having zero errors.
Maximize the probability (call it $\hat{p}$) of the slimmest majority, $\frac{m+1}{2}$, being incorrect, given the constraint that the sum of error rates over classifiers is $mp$:
\be \hat{p} \left(\frac{m+1}{2}\right) = mp, \ee
and solve for $\hat{p}$:
\be \hat{p} =  2 p \left(1 - \frac{1}{m+1}\right). \ee
If $\frac{m+1}{2}$ are incorrect with probability $\hat{p}$ and zero are incorrect with probability $1 - \hat{p}$, then $\hat{p}$ is the all-voting error rate.
\end{proof}

As $m$ increases, all-voting error rate can approach twice the error rate of selecting a classifier at random, because a voting classifier can have nearly half its classifiers correct and still be incorrect. Discretization can favor the individual.

\section{A basis to analyze voting} \label{section_basis}
Now consider ensemble classifiers with (odd) $m$ classifiers, average classifier error rate $p$, and an (odd) number of voters $v$ from one to $m$. Setting $v=1$ gives the single-voter strategy, and $v=m$ gives the all-voting strategy. In this section, we analyze error rate curves over numbers of voters, given numbers of ensemble classifiers incorrect. Those curves form a basis for error rate curves over numbers of voters, in the sense that the weighted sums of those basis curves (with nonnegative weights that sum to one) are all the possible error rate curves over numbers of voters:

\begin{theorem} \label{subset_error_rate}
For an ensemble with $m$ classifiers, for each $i \in \sset{0, \ldots, m}$, let $w_i$ be the probability that exactly $i$ of the $m$ classifiers are in error for an example drawn at random from an out-of-sample distribution. Then, for subset voting with $v$ classifiers, the average error rate over the out-of-sample distribution is
\be \sum_{i=0}^{m} w_i r(m, i, v), \ee
where
\be r(m, i, v) = \sum_{j = \frac{v+1}{2}}^{v} \frac{{{i}\choose{j}} {{m-i}\choose{v-j}}}{{{m}\choose{v}}} \label{r_def} \ee
is the expected voting error rate given that $i$ of $m$ classifiers are in error.
\end{theorem}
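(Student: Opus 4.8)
The plan is to prove this in two stages: first establish the formula for $r(m,i,v)$ conditionally (given exactly $i$ classifiers are in error), and then assemble the unconditional average by linearity using the weights $w_i$.

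Let me think about the setup. We have $m$ classifiers total, $v$ of them selected at random (without replacement, since this is the "subset voting" / selection without replacement setting described in the intro). A voting error occurs when a majority of the $v$ selected voters are in error — i.e., when at least $\frac{v+1}{2}$ of the selected voters are among the erring classifiers. Given that exactly $i$ of the $m$ classifiers are in error (and $m-i$ are correct), the number $J$ of erring classifiers among the $v$ selected follows a hypergeometric distribution: the probability that exactly $j$ of the $v$ voters are erring is $\binom{i}{j}\binom{m-i}{v-j} / \binom{m}{v}$.

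Then the plan is to prove the two parts.

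For the conditional formula, I would argue: condition on the event that exactly $i$ of the $m$ classifiers err. Since the $v$ voters are chosen uniformly at random from all $\binom{m}{v}$ subsets of size $v$, the number $J$ of erring voters is hypergeometric, with $\prob{J = j} = \binom{i}{j}\binom{m-i}{v-j}/\binom{m}{v}$. A voting error occurs exactly when $J \geq \frac{v+1}{2}$ (a strict majority of the odd number $v$ of voters err). Summing the hypergeometric probabilities over $j$ from $\frac{v+1}{2}$ to $v$ gives the conditional voting error probability, which is precisely $r(m,i,v)$ as defined in (\ref{r_def}). This establishes that $r(m,i,v)$ is the expected (indeed, the) voting error rate given that $i$ of $m$ classifiers err.

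For the unconditional average, I would invoke the law of total expectation (equivalently, linearity of expectation over the partition of examples by number of erring classifiers). The event "exactly $i$ of $m$ classifiers err" has probability $w_i$, and conditioned on it the voting error probability is $r(m,i,v)$; these events partition the out-of-sample distribution, so the overall error rate is $\sum_{i=0}^{m} w_i\, r(m,i,v)$. The main subtlety to get right — the part most worth stating carefully — is the conditional independence of the voter-selection randomness from which classifiers err: the selection of the $v$ voters is made independently of the example, so conditioned on $i$ classifiers erring, every size-$v$ subset is equally likely and the hypergeometric count applies uniformly across all examples with that many errors. Once that independence is stated, the rest is just bookkeeping; the boundary cases (e.g. $j$ exceeding $i$ or $v-j$ exceeding $m-i$) are handled automatically by the convention that binomial coefficients vanish when the lower index exceeds the upper one, so no special-casing of the summation limits is needed.
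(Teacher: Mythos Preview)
Your proposal is correct and follows essentially the same approach as the paper: decompose the overall error rate by conditioning on the number $i$ of erring classifiers (law of total probability with weights $w_i$), then identify the conditional probability that a majority of the $v$ uniformly chosen voters are in error as the hypergeometric tail $\sum_{j=\frac{v+1}{2}}^{v}\binom{i}{j}\binom{m-i}{v-j}/\binom{m}{v}$. The paper presents the two steps in the opposite order but the argument is the same; your extra remarks on the independence of voter selection from the example and on the vanishing-binomial convention are fine embellishments rather than a different route.
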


\begin{proof}
Voting has average error rate
\be \sum_{i=0}^{m} w_i \prob{\hbox{voting error}|i}, \ee
where $i$ is the number of the $m$ classifiers that are in error. Voting error requires a majority of voters to be in error, so
\be \prob{\hbox{voting error}|i} = \sum_{j = \frac{v+1}{2}}^{v} \prob{\hbox{$j$ voters are in error} | i}, \ee
and
\be \prob{\hbox{$j$ voters are in error} | i} = \frac{{{i}\choose{j}} {{m-i}\choose{v-j}}}{{{m}\choose{v}}}, \ee
since this is the number of ways to select $j$ voters from the $i$ incorrect ones and $v-j$ voters from the $m-i$ correct ones, divided by the number of ways to select $v$ voters from the $m$ classifiers.
\end{proof}

\begin{figure} 
\includegraphics[width=3.5in]{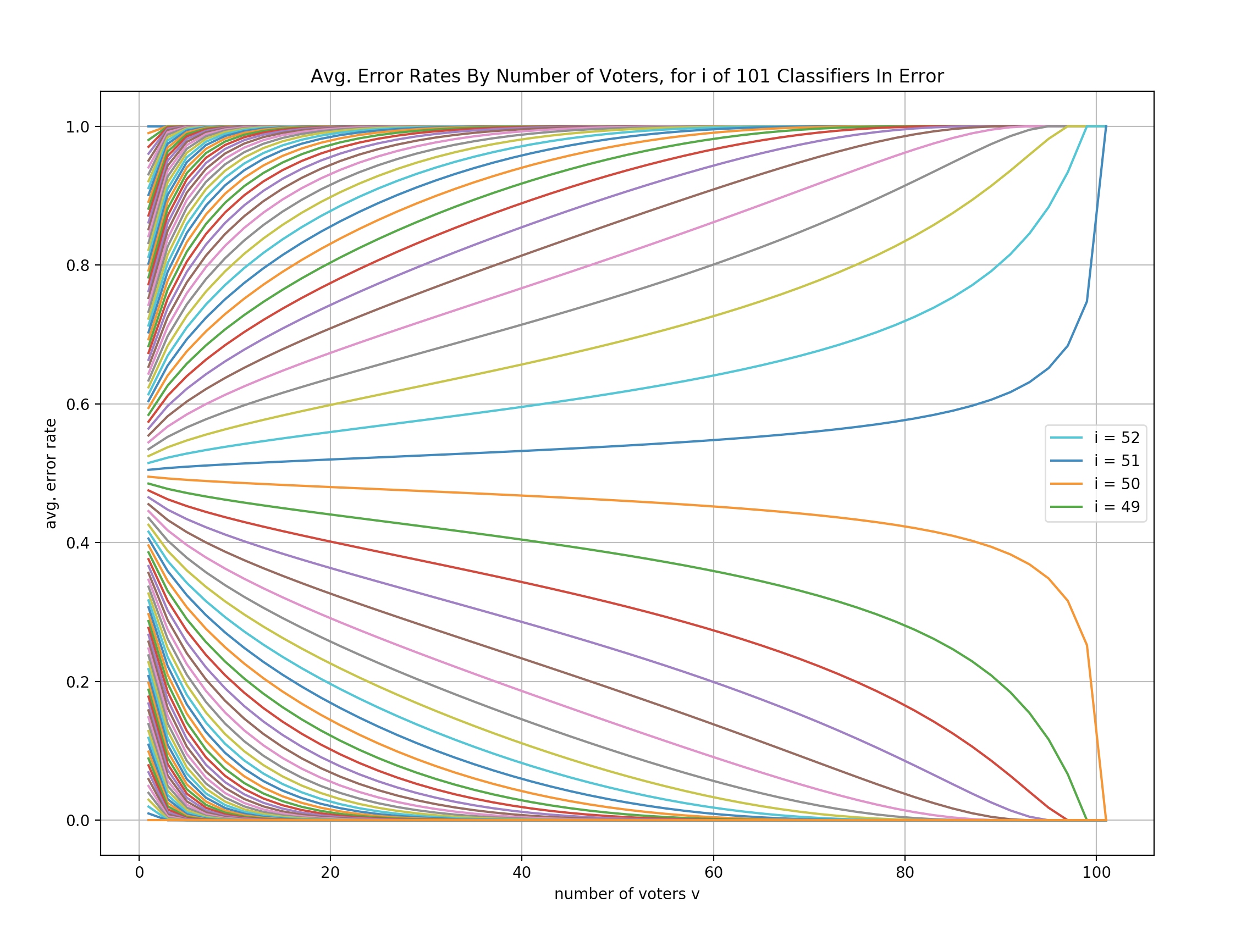} 
\caption{Average error rates $r(m, i, v)$, for $v$ voters, if $i \in \sset{0, \ldots, 101}$ of $m = 101$ classifiers are incorrect. The line for $i = 0$ is at the bottom of the figure -- a flat line of average error rate zero, since if all classifiers are correct, then any number of voters gives correct classification. Similarly, the line for $i = 101$ is at the top, and successive curves are for successive $i$-values. (Only the middle four are labeled, to avoid cluttering the figure.).}
\label{basis_plot}
\end{figure}

Figure \ref{basis_plot} shows how the probability of error for a voting classifier, $r(m, i, v)$ varies with the number of voters $v$, given that $i$ of $m$ classifiers are in error. The plot is for $m = 101$, with a curve for each number of errors $i \in \sset{0, \ldots, 101}$, connecting $r(m, i, v)$ values for odd $v$ from one to 101. Think of the curves as a basis. The set of weighted averages of the curves is the set of all possible error curves with respect to number of voters $v$ for ensembles of $m = 101$ classifiers.

From Figure \ref{basis_plot}, notice that if more than half the classifiers are in error ($i \geq 51$), then using more voters increases the out-of-sample error rate (except for $i = 101$ since it always gives a 100\% error rate). To see why consider $i = 51$. If 51 of 101 classifiers are in error, then selecting a single one and using it gives expected error rate $\frac{51}{101}$, but using all $101$ always results in an error, because 51 is a (slight) majority of the 101 voters. Between 1 and 101, using more voters increases the error rate, by making it more likely that the majority of voters will be incorrect. Also, notice that as $i$ increases above 51, the increase in error rate with number of voters  goes from concave up to nearly linear to concave down, showing that the loss in accuracy due to using a few voters rather than a single classifier increases with $i$. 

The following theorem shows that some things we can observe in Figure \ref{basis_plot} are true in general: adding voters strengthens classification for examples with fewer than half the ensemble classifiers in error, and it weakens classification for examples with more than half the ensemble classifiers in error. These effects only cease when there are so many voters that the minority among the ensemble (correct or incorrect) is too small to be a majority of the voters.

\begin{theorem} \label{updown_thm}
Let $r(m, i, v)$ be the voting ensemble error rate if $i$ of $m$ ensemble classifiers are in error, $v$ voters are selected at random without replacement from the ensemble, and their majority vote is returned. (Assume $v$ is odd.) Define the error rate difference due to increasing the number of voters by two:
\be\Delta_v(m, i, v) \equiv r(m, i, v + 2) - r(m, i, v).\ee
Then:
\begin{itemize}
\item If $i < \frac{v+1}{2}$ then $r(m, i, v) = 0$ and $\Delta_v(m, i, v) = 0$.
\item If $i<\frac{m}{2}$ and $i \geq \frac{v+1}{2}$, then $\Delta_v(m, i, v) < 0$.
\item If $i>\frac{m}{2}$ and $m - i \geq \frac{v+1}{2}$, then $\Delta_v(m, i, v) > 0$.
\item If $m - i < \frac{v+1}{2}$ then $r(m, i, v) = 1$ and $\Delta_v(m, i, v) = 0$.
\end{itemize} 
\end{theorem}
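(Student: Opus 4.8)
The plan is to recognize that, conditioned on exactly $i$ of the $m$ classifiers being in error, the number $J$ of the $v$ randomly chosen voters that are in error is hypergeometric, so that $r(m,i,v) = \prob{J \geq \frac{v+1}{2}}$ by Theorem \ref{subset_error_rate}. With this in hand the two boundary cases are immediate. If $i < \frac{v+1}{2}$, then there are too few erring classifiers for a majority of voters ever to be in error, so $r(m,i,v)=0$; since $\frac{v+1}{2} < \frac{v+3}{2}$ the same holds at $v+2$, giving $\Delta_v(m,i,v)=0$. Symmetrically, if $m-i < \frac{v+1}{2}$ there are too few correct classifiers to avoid a majority in error, so $r=1$ at both $v$ and $v+2$, again giving $\Delta_v(m,i,v)=0$.

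For the two interior cases I would introduce a coupling. Draw the $v$ voters, then draw two more from the remaining $m-v$ classifiers (valid since $m-v \geq 2$, because $v+2 \leq m$ with $v,m$ odd forces $v \leq m-2$). Let $X \in \sset{0,1,2}$ be the number of erring classifiers among the two extra voters and write $t = \frac{v+1}{2}$. Then the number in error among all $v+2$ voters is $J+X$, the $(v+2)$-vote errs iff $J+X \geq t+1$, and the $v$-vote errs iff $J \geq t$. Comparing these two indicators case by case on the value of $J$ shows the verdict can change only in two situations: when $J=t-1$ and $X=2$ (a new error is created) or when $J=t$ and $X=0$ (an error is removed). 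Hence
\be \Delta_v(m,i,v) = \prob{J=t-1, X=2} - \prob{J=t, X=0}. \ee

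The crux is the combinatorial simplification of these two joint probabilities. I would write each as the hypergeometric factor for $J$ times the conditional chance of the two extra draws, then apply the subset-of-a-subset identity ${n \choose a}{n-a \choose b} = {n \choose a+b}{a+b \choose b}$ to both terms. Using $v=2t-1$ the clutter collapses, a common factor ${t+1 \choose 2}$ emerges, and the sign of $\Delta_v(m,i,v)$ reduces to the sign of
\be {m-i \choose t}{i \choose t+1} - {i \choose t}{m-i \choose t+1}. \ee
Factoring ${i \choose t+1} = {i \choose t}\frac{i-t}{t+1}$ and ${m-i \choose t+1} = {m-i \choose t}\frac{m-i-t}{t+1}$ turns this expression into a nonnegative multiple of $(2i-m)$. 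I would finish by checking that the surviving binomials are nonzero in each interior case (when $i < \frac{m}{2}$ we have $i \geq t$ and $m-i > i \geq t$; when $i > \frac{m}{2}$ we have $m-i \geq t$ and $i > m-i \geq t$), so that $\Delta_v(m,i,v) < 0$ for $i < \frac{m}{2}$ and $\Delta_v(m,i,v) > 0$ for $i > \frac{m}{2}$. I expect the bookkeeping in that collapse to be the only delicate part; the coupling and the boundary arguments are purely structural.
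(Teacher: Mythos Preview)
Your proposal is correct and follows essentially the same route as the paper: both use the coupling of drawing $v$ voters then two more, observe that the verdict can flip only when $J=\frac{v-1}{2}$ with two bad extras or $J=\frac{v+1}{2}$ with two good extras, and reduce the sign of $\Delta_v$ to that of $2i-m$. Your use of the subset-of-a-subset identity $\binom{n}{a}\binom{n-a}{b}=\binom{n}{a+b}\binom{a+b}{b}$ is a slightly cleaner way to collapse the algebra than the paper's direct factorial manipulations, but the argument is the same.
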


\begin{proof}
The first and last bullet points in the theorem are straightforward. For the others, suppose $v$ voters have been selected from $m$ classifiers. How can adding two voters make an incorrect decision correct? There is only one way: the $v$ voters must contain the maximum number of incorrect voters to still be correct: $\frac{v-1}{2}$, and both added voters must be incorrect. That makes the number of incorrect voters
\be \frac{v-1}{2} + 2 = \frac{v + 3}{2} = \frac{(v+2) + 1}{2}, \ee
out of $v+2$ voters. Since this is the minimum possible majority, starting with fewer than $\frac{v-1}{2}$ incorrect voters or adding fewer than two incorrect voters will not work. Similarly, to go from incorrect with $v$ voters to correct with $v+2$ voters, it is necessary to start with $\frac{v+1}{2}$ incorrect voters and add two correct voters. 

Note that $\Delta_v(m, i, v)$ is the difference between the probability of going from correct to incorrect and the probability of going from incorrect to correct. The probability of going from incorrect to correct is the probability of selecting $\frac{v-1}{2}$ of the $i$ incorrect voters when selecting $v$ of the $m$ classifiers without replacement, times the probability of getting two incorrect classifiers when selecting two of the remaining $m-v$ classifiers as voters:
\be \frac{{{i}\choose{\frac{v-1}{2}}} {{m-i}\choose{\frac{v+1}{2}}}}{{{m}\choose{v}}} \left(\frac{i - \frac{v-1}{2}}{m-v}\right) \left(\frac{i - \frac{v-1}{2} - 1}{m - v- 1}\right). \ee
Similarly, the probability of going from incorrect to correct is:
\be \frac{{{i}\choose{\frac{v+1}{2}}} {{m-i}\choose{\frac{v-1}{2}}}}{{{m}\choose{v}}} \left(\frac{m - i - \frac{v-1}{2}}{m-v}\right) \left(\frac{m - i - \frac{v-1}{2} - 1}{m - v- 1}\right).\ee

To find the difference, $\Delta_v(m, i, v)$, note that
\be {{i}\choose{\frac{v-1}{2}}} = \frac{i (i-1)!}{\left(\frac{v-1}{2}\right)! \left(i - \frac{v-1}{2}\right) \left(i - 1 - \frac{v-1}{2}\right)!} 
\ee
\be
= i {{i-1}\choose{\frac{v-1}{2}}} \left(\frac{1}{i - \frac{v-1}{2}}\right), \ee
and
\be {{i}\choose{\frac{v+1}{2}}} = \frac{i (i-1)!}{\left(\frac{v+1}{2}\right) \left(\frac{v-1}{2}\right)! \left(i - \frac{v+1}{2}\right)!} 
\ee
\be
= \frac{i (i-1)!}{\left(\frac{v+1}{2}\right) \left(\frac{v-1}{2}\right)! \left(i - 1 - \frac{v-1}{2}\right)!} 
\ee
\be
= i {{i-1}\choose{\frac{v-1}{2}}} \left(\frac{1}{\frac{v+1}{2}}\right). 
\ee
Similarly, 
\be {{m-i}\choose{\frac{v+1}{2}}} = \left(m - i\right) {{m-i-1}\choose{\frac{v-1}{2}}} \left(\frac{1}{\frac{v+1}{2}}\right),\ee
and
\be  {{m-i}\choose{\frac{v-1}{2}}} = \left(m - i\right) {{m-i-1}\choose{\frac{v-1}{2}}} \left(\frac{1}{m - i - \frac{v-1}{2}}\right).\ee
Use these equalities to factor out some common terms:
\be \Delta_v(m, i, v) = {{m}\choose{v}}^{-1} i {{i-1}\choose{\frac{v-1}{2}}} \left(m - i\right) {{m-i-1}\choose{\frac{v-1}{2}}} \ee
\be \left[ \frac{\left(\frac{i - \frac{v-1}{2}}{m-v}\right) \left(\frac{i - \frac{v-1}{2} - 1}{m - v- 1}\right)}{\left(i - \frac{v-1}{2}\right) \left(\frac{v+1}{2}\right)}  \right.\ee
\be \left. -  \frac{\left(\frac{m - i - \frac{v-1}{2}}{m-v}\right) \left(\frac{m - i - \frac{v-1}{2} - 1}{m - v- 1}\right) }{\left(\frac{v+1}{2}\right) \left(m - i - \frac{v-1}{2}\right)} \right].\ee
For both terms in brackets, cancel one denominator factor with a numerator factor, and factor out the other denominator factors:
\be \Delta_v(m, i, v) = {{m}\choose{v}}^{-1} i {{i-1}\choose{\frac{v-1}{2}}} \left(m - i\right) {{m-i-1}\choose{\frac{v-1}{2}}} 
\ee
\be
\frac{\left[ \left(i - \frac{v-1}{2} - 1\right) - \left(m - i - \frac{v-1}{2} - 1\right)\right]}{\left(\frac{v+1}{2}\right) \left(m-v\right) \left(m-v-1\right)} .\ee
Then cancel in brackets:
\be \Delta_v(m, i, v) = {{m}\choose{v}}^{-1} i {{i-1}\choose{\frac{v-1}{2}}} \left(m - i\right) {{m-i-1}\choose{\frac{v-1}{2}}} \ee
\be \frac{\left[2i - m\right]}{\left(\frac{v+1}{2}\right) \left(m-v\right) \left(m-v-1\right)}.\ee
Only the term $2i-m$ may be negative. It is negative if $i < \frac{m}{2}$ and positive if $i > \frac{m}{2}$. Based on the second through fifth factors, $\Delta_v(m, i, v) = 0$ if at least one of the following conditions holds: $i = 0$, $i = m$,  $i - 1 < \frac{v-1}{2}$ (equivalently: $i < \frac{v+1}{2}$), or $m - i < \frac{v+1}{2}$.
\end{proof}

\section{Optimal numbers of voters} \label{section_optimal}
\begin{figure} 
\includegraphics[width=3.5in]{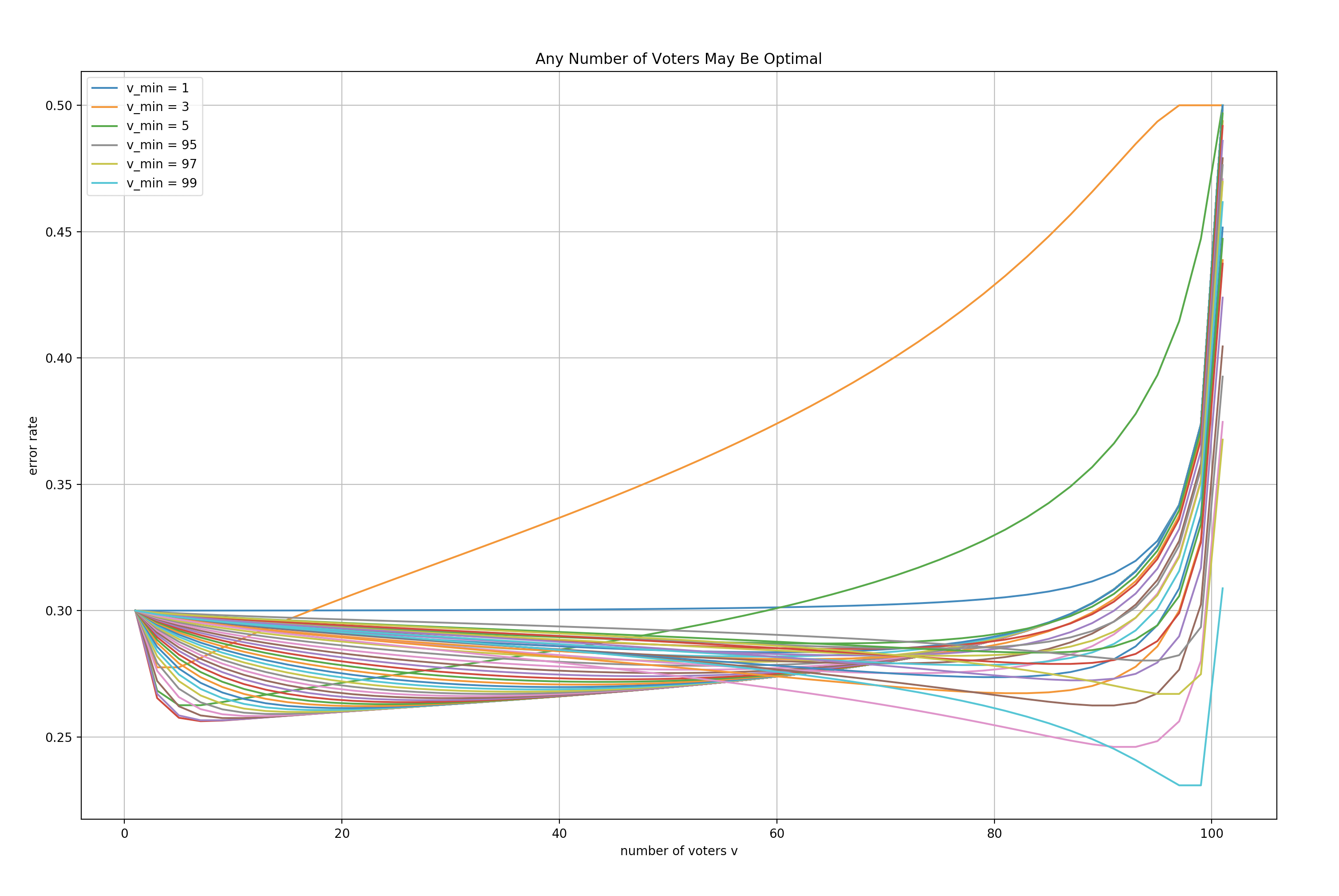} 
\caption{Any (odd) number of voters may be optimal. For $m = 101$, error curves for which each $\vmin \in \sset{1, 3, \ldots, 99}$ is an error-minimizing number of voters. (Each curve is based on a different distribution over number of ensemble classifiers incorrect. Average ensemble classifier error rate is 0.3.)}
\label{gap_plot}
\end{figure}

\begin{figure}
\includegraphics[width=3.5in]{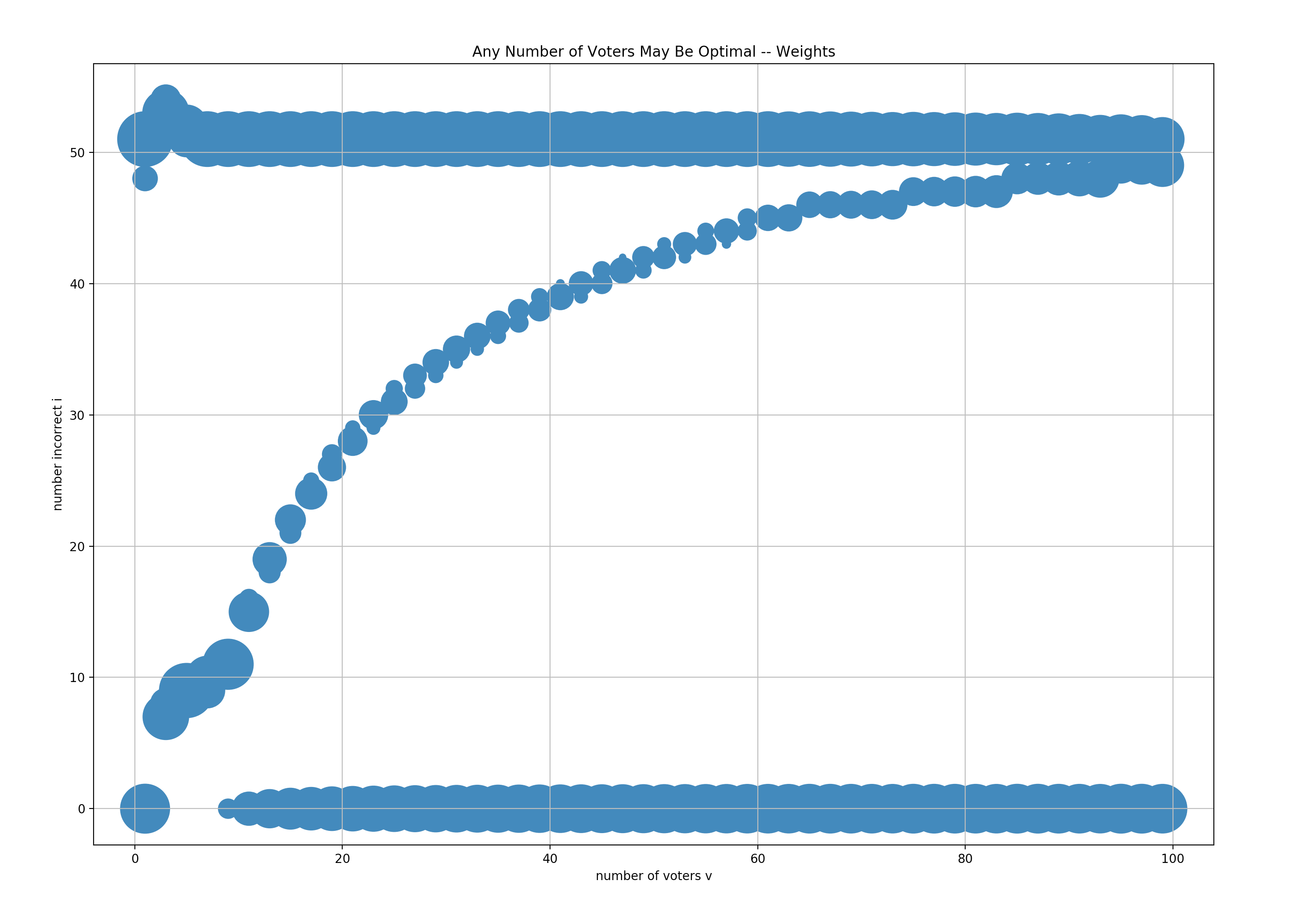} 
\caption{Distributions $w_i$ for $i \in \sset{0, \ldots, 101}$ that produce curves for which $v = \vmin$ is an error-minimizing number of voters.}
 \label{gap_plot_weights}
\end{figure}

Any (odd) number of voters may be optimal. For each $\vmin \in \sset{1, 3, \ldots, 99}$, with $m = 101$, Figure \ref{gap_plot} shows a voting error rate curve for a distribution $(w_0, \ldots, w_m)$ for which $\vmin$ is an error-minimizing number of voters. For each curve, the distribution $(w_0, \ldots, w_m)$ is produced by a linear program that maximizes difference in error rates between using 101 and using $\vmin$ voters, subject to constraints:
\begin{itemize}
\item voting error rate with $\vmin$ classifiers is no more than that for a single voter, for $\vmin-2$ voters, or for $\vmin+2$ voters,
\item voting error rate with all classifiers voting is at most $0.5$ (otherwise, it is possible to just use the opposite of its output to get a lower error rate), and
\item average error rate over classifiers is 0.3.
\end{itemize}

The figure shows that any odd number of voters up to 99 can minimize error rate for $m = 101$. The figure does not show this for $\vmin = 101$, since the linear program optimizes the difference between using $\vmin$ and using 101 voters. However, Figure \ref{basis_plot} shows that 101 is the optimal number of voters for the distribution $w_{50} = 1$ and all other $w_i = 0$. 

Figure \ref{gap_plot_weights} shows the distributions of weights for each $\vmin$ curve in Figure \ref{gap_plot}. For most $\vmin$ values, the error count distributions $(w_0, \ldots, w_m)$ that give the largest gap between using $\vmin$ voters and the full set of $m = 101$ voters have some weight on $w_0$, some on $w_{51}$, and some on one or two intermediate numbers of errors. The curve for each $\vmin$ is the weighted sum of the "basis" curves for $i=0$, $i=51$, and the intermediate value or values from Figure \ref{basis_plot}. The weight on $w_0$ lowers error rates, helping enforce the constraint on average error rate over classifiers and the constraint on error rate for 101 voters. The weight on $w_{51}$ ensures that the curve for $\vmin$ goes up on the right, increasing the difference in error rates between $\vmin$ voters and 101 voters as well as helping to ensure that $\vmin$ voters is locally optimal compared to $\vmin + 2$ voters. Any weights on intermediate values can contribute to the curvature, ensuring that $\vmin$ voters is locally optimal compared to $\vmin - 2$ voters. These weights can give us insight into how to prove that any (odd) number of voters can be optimal for any number $m$ of classifiers: 

\begin{theorem} \label{any_number_thm}
For $m > 0$ and any $\vmin \in \sset{1, 3, \ldots, m}$, there is a distribution $(w_0, \ldots, w_m)$, where $w_i$ is the probability that $i$ of $m$ classifiers are in error, such that $\vmin$ voters achieves the minimum voting error rate.
\end{theorem}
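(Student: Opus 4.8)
The plan is to realize $\vmin$ as the unique error-minimizing number of voters using a distribution supported on only two error counts, driving everything by the monotonicity directions of Theorem \ref{updown_thm}. Concretely, I would put weight $\beta$ on $i = \frac{m+1}{2}$, the slimmest possible majority in error, and weight $1 - \beta$ on $i^{*} = \frac{\vmin - 1}{2}$, for a mixing parameter $\beta \in (0,1)$ to be pinned down later (when $\vmin = 1$ the mass $1-\beta$ sits on $w_0$, and when $\vmin = m$ it sits on $w_{\frac{m-1}{2}}$). These two indices are always distinct and the weights form a valid distribution. By Theorem \ref{subset_error_rate} the resulting error curve is $E(v) = \beta\, r(m, \frac{m+1}{2}, v) + (1-\beta)\, r(m, i^{*}, v)$, and I will show it strictly decreases up to $\vmin$ and strictly increases afterward.

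The crucial design choice is $i^{*} = \frac{\vmin-1}{2}$: since $i^{*} < \frac{v+1}{2}$ precisely when $v \geq \vmin$, the first bullet of Theorem \ref{updown_thm} gives $r(m, i^{*}, v) = 0$ for every $v \geq \vmin$. Thus on the right, $v \in \sset{\vmin, \ldots, m}$, the curve collapses to $E(v) = \beta\, r(m, \frac{m+1}{2}, v)$, which by the third bullet of Theorem \ref{updown_thm} (here $\frac{m+1}{2} > \frac{m}{2}$ and $m - \frac{m+1}{2} = \frac{m-1}{2} \geq \frac{v+1}{2}$ for all $v \leq m-2$) is strictly increasing for any $\beta > 0$. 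This settles the right half of the valley with no tuning: $E(\vmin) < E(v)$ for all $v > \vmin$.

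Next I would handle the left half, $v \in \sset{1, 3, \ldots, \vmin - 2}$. For these $v$ we have $i^{*} < \frac{m}{2}$ and $i^{*} \geq \frac{v+1}{2}$, so the second bullet of Theorem \ref{updown_thm} makes $r(m, i^{*}, v)$ strictly decreasing, while the $\frac{m+1}{2}$ component is increasing. Writing the increment as $E(v+2) - E(v) = \beta\, \Delta_v(m, \frac{m+1}{2}, v) + (1-\beta)\, \Delta_v(m, i^{*}, v)$, the task is to make the negative (decreasing) term dominate the positive one uniformly across this finite range. Setting $\mu = \min_v |\Delta_v(m, i^{*}, v)| > 0$ and $M = \max_v \Delta_v(m, \frac{m+1}{2}, v)$ over $v \in \sset{1, \ldots, \vmin-2}$, any choice $\beta < \frac{\mu}{\mu + M}$ forces $(1-\beta)\mu > \beta M$ and hence $E(v+2) - E(v) < 0$ throughout. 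Together with the right-half analysis this makes $\vmin$ the unique global minimizer.

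The main obstacle is exactly this left-half domination: whereas the right half is free because the chosen low-error component vanishes there, on the left one must certify with a single $\beta$ that the decreasing contribution beats the increasing one at every odd $v$ below $\vmin$ simultaneously. The strict sign information supplied by Theorem \ref{updown_thm}, combined with the finiteness of the voter range (so that $\mu$ and $M$ are attained and $\mu > 0$), is what makes such a uniform $\beta$ available. The only cases needing a separate, trivial check are $\vmin = 1$, where the left range is empty, and $\vmin = m$, where the right range is empty.
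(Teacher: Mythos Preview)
Your argument is correct and follows essentially the same two-point mixture strategy as the paper: both put mass on $i^{*}=\frac{\vmin-1}{2}$ so that this component vanishes for $v\ge\vmin$ and strictly decreases for $v<\vmin$, then choose the mixing weight small enough that the decreasing piece dominates on the left. The only difference is the second support point---you use the fixed index $\frac{m+1}{2}$ (giving a curve that is strictly increasing for all odd $v$), whereas the paper uses $b=m-\frac{\vmin+1}{2}$ (giving a curve that equals $1$ for every $v>\vmin$); one small caveat is that your $\frac{m+1}{2}$ is an integer only for odd $m$, so if even $m$ is in scope you should replace it by $\lfloor m/2\rfloor+1$.
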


\begin{proof}
First consider the boundary cases: $\vmin = 1$ and $\vmin = m$ (or $\vmin = m - 1$ if $m$ is even). If $\vmin=1$, then set $w_{m-1} = 1$. That gives a $\frac{m-1}{m}$ error rate for $v = 1$, since there is one correct classifier, and error rate one for $v \geq 3$, since the single correct classifier cannot form a majority. Similarly, if $\vmin = m$ or $\vmin = m-1$, then set $w_i = 1$ for $i = \frac{\vmin - 1}{2}$. That produces error rate zero for $\vmin$ voters, because the incorrect voters cannot form a majority, but the error rate for smaller $v$ values is positive since the incorrect voters can form a majority for those values. 

Now consider intermediate values: $1 < \vmin < m - 1$. Let $a = \frac{\vmin-1}{2}$ and $b = m - \frac{\vmin+1}{2}$. For $a$ errors among $m$ classifiers, the voting error rate is zero for $v \geq \vmin$. For $b$ errors, it is one for $v > \vmin$, but less than one for $v = \vmin$, because the $\frac{\vmin+1}{2}$ can form a majority among $\vmin$ voters. As a result, any weighted average (with positive weights) of error curves for $a$ and $b$ has voting error rate greater than $\vmin$ for all $v > \vmin$. So set $w_a = \theta$, $w_b = 1 - \theta$, and $w_i = 0$ for all other $i$ values. 

Then select $\theta \in (0,1)$ to ensure that the linear combination of curves has voting error rates less than the rate for $\vmin$ for all $v < \vmin$. It is possible to do this by taking $\theta$ sufficiently close to one, because doing so makes the combined curve resemble the curve for $a$ errors more and the curve for $b$ errors less. (This requires that the error rates on the curve for $b$ be bounded, so that multiplying by $1 - \theta$ can reduce their influence as $1 - \theta$ approaches zero, but they are bounded by one since they are error rates.) By Theorem \ref{updown_thm}, the curve for $a$ errors is strictly decreasing in $v$ for $v < \vmin$, since $i = a = \frac{\vmin-1}{2} < \frac{m}{2}$ and $i = a = \frac{\vmin-1}{2}  \geq \frac{v+1}{2}$. 
\end{proof}

\section{Selecting the number of voters} \label{section_select}
Assume we have $n$ validation examples, drawn i.i.d. from the out-of-sample distribution, and not used to train or select classifiers for the ensemble, and we want to use them to select the number of voters for the ensemble. A straightforward method is to apply the voting classifier for each odd number of voters $v$ in one to $m$ to all the validation examples, selecting voters at random for each number of voters and validation example. Then calculate the error rate over the validation examples for each number of voters, and select the number of voters with the lowest validation error rate. 

To get error bounds, for each number of voters $v$, let $k_v$ be the number of errors over the validation examples. Let $u(n, k, \delta)$ be a PAC (probably approximately correct) upper bound and $t(n, k, \delta)$ be a PAC lower bound for the probability that produces $k$ events in $n$ samples with bound failure probability $\delta$. Since we simultaneously validate error rates for $\frac{m + 1}{2}$ different numbers of voters and use two-sided bounds, for $\delta > 0$, with probability at least $1 - \delta$, each number of voters has out-of-sample distribution error rate in the range $t(n, k_v, \frac{\delta}{m + 1})$ to $u(n, k_v, \frac{\delta}{m + 1})$. For example, using Hoeffding bounds \citep{hoeffding63}, the range is:
\be
\frac{k_v}{n} \pm \sqrt{\frac{\ln (m + 1) - \ln \delta}{2 n}}. \label{hoeff_direct}
\ee
In practice, use a binomial inversion bound \citep{hoel54,langford05}, to get a tighter bound range.

Selecting voters at random for each example introduces some variance. To produce lower-variance estimates of voting error rates, instead compute for each validation example $j$ the number of ensemble classifiers in error $i_j$. Recall from Equation \ref{r_def} that $r(m, i, v)$ is the expected voting error rate for $v$ voters, given $i$ ensemble classifiers in error. Then, for each number of voters $v$, an estimate of out-of-sample error rate is:
\be
\frac{1}{n} \sum_{j=1}^{n} r(m, i_j, v). \label{inf_est_sum}
\ee
Let $\hat{w}_i$ be the fraction of validation examples for which there are $i$ classifiers in error ($i_j = i$). Then the estimate is
\be
= \sum_{i = 0}^{m} \hat{w}_i r(m, i, v). \label{better_est}
\ee
Refer to these estimates as the inference estimates, since they use estimates of the rates of numbers of errors in the ensemble to infer a estimates of out-of-sample error rates for different numbers of voters. 

\begin{theorem}
For each number of voters $v$, the out-of-sample error rate estimate from using validation data to compute estimates $\hat{w}_i$ and using them to infer average voting error rate has variance less than or equal to the estimate from applying randomly-selected voters to validation examples.
\end{theorem}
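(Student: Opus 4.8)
The plan is to recognize the inference estimate as the Rao--Blackwellization of the direct estimate, conditioning on the numbers of ensemble classifiers in error, and then to apply the law of total variance. First I would fix the number of voters $v$ and separate the two sources of randomness. For validation example $j$, let $I_j$ be the number of the $m$ ensemble classifiers in error; since the examples are drawn i.i.d.\ from the out-of-sample distribution, the $I_j$ are i.i.d.\ with $\prob{I_j = i} = w_i$. For the direct estimate, let $X_j$ be the indicator that the majority vote of $v$ voters, drawn at random without replacement for example $j$, is in error. The direct estimate is $\frac{1}{n}\sum_{j=1}^{n} X_j$ and the inference estimate is $\frac{1}{n}\sum_{j=1}^{n} r(m, I_j, v)$.

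The key step is the identity $r(m, I_j, v) = E[X_j \mid \mathbf{I}]$, where $\mathbf{I} = (I_1, \ldots, I_n)$. By the proof of Theorem \ref{subset_error_rate}, $r(m, i, v)$ is exactly the probability that a majority of $v$ randomly chosen voters is in error given that $i$ of the $m$ classifiers are in error, so $E[X_j \mid I_j] = r(m, I_j, v)$. Because the voter selection for example $j$ is independent of all other examples and of their voter selections, conditioning additionally on the other $I_{j'}$ changes nothing: $E[X_j \mid \mathbf{I}] = E[X_j \mid I_j] = r(m, I_j, v)$. Averaging over $j$ gives
\be E\!\left[\frac{1}{n}\sum_{j=1}^{n} X_j \;\Big|\; \mathbf{I}\right] = \frac{1}{n}\sum_{j=1}^{n} r(m, I_j, v), \ee
so the inference estimate is precisely the conditional expectation of the direct estimate given $\mathbf{I}$.

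Then I would finish with the law of total variance. Writing $D = \frac{1}{n}\sum_{j=1}^{n} X_j$ for the direct estimate, we have
\be \mathrm{Var}(D) = E\!\left[\mathrm{Var}(D \mid \mathbf{I})\right] + \mathrm{Var}\!\left(E[D \mid \mathbf{I}]\right), \ee
and the second term on the right is exactly the variance of the inference estimate. Since $E[\mathrm{Var}(D \mid \mathbf{I})] \geq 0$, the variance of the inference estimate is at most the variance of the direct estimate, which is the claim. The two means agree, since $E[E[D \mid \mathbf{I}]] = E[D]$, so both estimates are unbiased for $\sum_{i=0}^{m} w_i r(m, i, v)$, confirming this is a fair comparison of two estimators of the same quantity.

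There is no deep obstacle here; the argument is essentially Rao--Blackwell. The point that requires care is the conditional-expectation identity: one must argue that the extra randomness in the direct estimate is exactly the voter selection, that it is independent across examples, and hence that conditioning on the full vector $\mathbf{I}$ (not just $I_j$) still collapses each $X_j$ to $r(m, I_j, v)$. Since the voter selections are independent across examples, the $X_j$ are conditionally independent given $\mathbf{I}$, so the variance gap equals $E[\mathrm{Var}(D \mid \mathbf{I})] = \frac{1}{n^2}\sum_{j=1}^{n} E[\mathrm{Var}(X_j \mid I_j)]$, which is strictly positive unless $r(m, I_j, v) \in \{0, 1\}$ almost surely, quantifying exactly how much the direct estimate's voter-selection randomness inflates its variance.
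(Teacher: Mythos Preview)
Your proof is correct. It takes a somewhat different route from the paper's: the paper reduces to a single validation example, computes each variance explicitly (the direct estimate is Bernoulli with parameter $p = \sum_i w_i r_i$, giving variance $p - p^2$; the inference estimate has variance $\sum_i w_i r_i^2 - p^2$), and then checks that the difference $\sum_i w_i (r_i - r_i^2)$ is nonnegative because $r_i \in [0,1]$. You instead recognize the inference estimate as the conditional expectation of the direct estimate given the error counts and invoke the law of total variance, i.e., Rao--Blackwell. The two arguments are equivalent at bottom---your residual term $E[\mathrm{Var}(D\mid\mathbf{I})] = \frac{1}{n}\sum_i w_i r_i(1-r_i)$ is exactly the paper's explicit gap, scaled by $1/n$---but your framing makes clear \emph{why} the inequality holds (conditioning never increases variance) and would transfer immediately to other settings, while the paper's hands-on computation is more self-contained and surfaces the explicit variance-reduction formula without appealing to a named theorem.
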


\begin{proof}
For inference, the estimate is the mean of $r(m, i, v)$ over validation examples, where $i$ is the number of ensemble classifiers in error. For brevity, call $r(m, i, v)$ simply $r_i$. The direct estimate is the mean over validation examples of one if voting errs and zero otherwise. For both, the variance sums over examples, since the validation examples are independent samples. So we can focus on the variances for a single validation example. 

For direct estimation, the single-example error estimate is one with probability $p$, where $p$ is the out-of-sample distribution error rate for using $v$ voters, and zero with probability $1 - p$. Since it is a Bernoulli random variable, it has variance $p (1 - p) = p - p^2$. 

For inference, the single-example error estimate is $r(m, i, v)$, with probability $w_i$ for each value of $i$. The variance is the difference between the expectation of the square and the square of the expectation:
\be
\sum_i w_i r_i^2 - \left[ \sum_i w_i r_i \right]^2.
\ee
But 
\be
p = \sum_i w_i r_i,
\ee
so the variance is
\be
\sum_i w_i r_i^2 - p^2.
\ee
Recall that the variance for direct estimation is $p - p^2$. So the difference between variances is
\be
p - \sum_i w_i r_i^2  = \sum_i w_i r_i - \sum_i w_i r_i^2 = \sum_i w_i \left[r_i - r_i^2\right].
\ee
Since each $r_i$ is a probability (of voting error given $i$ ensemble errors), $r_i \in [0, 1]$. So $r_i^2 \leq r_i$. 
\end{proof}

\section{Error Bounds and Inference} \label{section_inf_val}
\subsection{Error Bounds}
Now consider how to compute error bounds based on the inference estimates. One way is to apply simultaneous Hoeffding bounds:

\begin{theorem} \label{inf_bd_thm}
With probability at least $1 - \delta$, the out-of-sample error rates for all numbers of voters $v$ are within 
\be
\sqrt{\frac{\ln (m + 1) - \ln \delta}{2 n}}
\ee
of the inference estimates from Expressions \ref{inf_est_sum} and \ref{better_est}.
\end{theorem}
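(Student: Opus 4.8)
The plan is to reduce the statement to one application of Hoeffding's inequality per number of voters, followed by a union bound over the $\frac{m+1}{2}$ odd values of $v$ in $\{1, 3, \ldots, m\}$. First I would fix a single number of voters $v$ and observe that the inference estimate in Expression \ref{inf_est_sum}, namely $\frac{1}{n}\sum_{j=1}^{n} r(m, i_j, v)$, is an average of $n$ independent and identically distributed random variables, one per validation example. The $j$th variable is $r(m, i_j, v)$, a deterministic function of the number $i_j$ of ensemble classifiers in error on example $j$; since the validation examples are drawn i.i.d.\ from the out-of-sample distribution, these variables are i.i.d. Their common expectation is $\sum_{i=0}^{m} w_i r(m, i, v)$, which by Theorem \ref{subset_error_rate} is exactly the true out-of-sample error rate for $v$ voters. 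So the inference estimate is an unbiased average whose deviation from the true error rate I must control.

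Second, I would note that each summand lies in $[0, 1]$: $r(m, i, v)$ is a probability (the conditional voting error rate given $i$ classifiers in error, from Equation \ref{r_def}), so it is bounded between zero and one for every $i$ and $v$. This boundedness is the only structural property needed, and it is the natural place to check carefully, although here it is immediate. With bounded i.i.d.\ summands, the two-sided Hoeffding inequality gives, for any $\epsilon > 0$, that the probability the estimate for this single $v$ deviates from its mean by at least $\epsilon$ is at most $2 e^{-2 n \epsilon^2}$.

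Third, I would apply a union bound across the $\frac{m+1}{2}$ admissible odd values of $v$. Because the estimates for different $v$ all share the same validation data (the same $i_j$ values), they are dependent; but the union bound needs no independence, so the probability that \emph{any} number of voters has its estimate deviate by at least $\epsilon$ is at most $\frac{m+1}{2} \cdot 2 e^{-2 n \epsilon^2} = (m+1) e^{-2 n \epsilon^2}$. Setting this equal to $\delta$ and solving for $\epsilon$ yields $\epsilon = \sqrt{\frac{\ln(m+1) - \ln \delta}{2 n}}$, the claimed bound, which simultaneously covers Expressions \ref{inf_est_sum} and \ref{better_est} since those two expressions are equal.

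I do not expect a serious obstacle. The one point requiring care is the accounting of the confidence budget: the factor of two from the two-sided bound and the count $\frac{m+1}{2}$ of simultaneous tests must combine to give exactly $m+1$ inside the logarithm, matching the per-voter split $\frac{\delta}{m+1}$ used for the direct bounds in Expression \ref{hoeff_direct}. The conceptual content is simply that inference replaces the Bernoulli error indicator of direct estimation with the bounded quantity $r(m, i_j, v)$, and Hoeffding applies equally to either, so the inference bound inherits the same form as the direct bound.
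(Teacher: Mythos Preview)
Your proposal is correct and follows essentially the same approach as the paper: recognize that the inference estimate for each $v$ is an average of $n$ i.i.d.\ $[0,1]$-valued random variables with mean equal to the true out-of-sample error rate, apply Hoeffding's inequality, and then account for $\frac{m+1}{2}$ values of $v$ together with two-sided bounds to obtain the factor $m+1$ inside the logarithm. Your write-up is in fact more explicit than the paper's about the Hoeffding constant and the solving step, but the structure and key ideas are identical.
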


\begin{proof}
From Expression \ref{inf_est_sum}, the inference estimate for each $v$ is
\be
\frac{1}{n} \sum_{j=1}^{n} r(m, i_j, v).
\ee
This is the average of $n$ i.i.d. random variables $r(m, i_j, v) \in [0, 1]$, one for each validation example $j$. The estimate's mean (over draws of validation examples) is the out-of-sample error rate for $v$, since $r(m, i, v)$ is the average error rate for $v$ voters, averaged over all sets of $v$ voters. So Hoeffding bounds apply \citep{hoeffding63}. There are $\frac{m + 1}{2}$ numbers of voters, and two-sided bounds, making $m + 1$ simultaneous validations. So set $\delta$ to $\frac{\delta}{m + 1}$ in the Hoeffding bound.
\end{proof}

\subsection{Validation by Inference}
Alternatively, we could first bound out-of-sample rates of $i$ errors among the ensemble classifiers, then use those bounds to infer bounds on ensemble error rates. (This strategy is called validation by inference \citep{bax_val_by_inference}.) As before, let $w_i$ be the out-of-sample rate of $i$ errors among the $m$ ensemble classifiers, and let $\hat{w}_i$ be the in-sample rate over $n$ validation examples. Also as before, let $u(n, k, \delta)$ and $t(n, k, \delta)$ be PAC upper and lower bounds on the out-of-sample probability that produces $k$ events in $n$ samples, with bound failure probability $\delta$. 

Then, using simultaneous validation for $2 (m + 1)$ bounds (2 for upper and lower bounds, and $m + 1$ for zero to $m$ errors among the ensemble classifiers), with probability at least $1 - \delta$, 
\be
\forall_{i \in \{1, \ldots, m\}} w_i \in \left[t(n, \hat{w}_i n, \frac{\delta}{2 (m + 1)}), u(n, \hat{w}_i n, \frac{\delta}{2 (m + 1)}) \right]. \label{box_constraints}
\ee
(Dividing $\delta$ to get simultaneous bounds is known as the Bonferroni correction \citep{bonferroni36,dunn61}.) 

To derive out-of-sample error bounds for each number of voters $v$, use linear programming to optimize
\be
\sum_{i=0}^{m} w_i r(m, i, v)
\ee
over the set of feasible out-of-sample rates of numbers of errors given by Expression \ref{box_constraints}, with the additional constraints 
\be
\sum_{i=0}^{m} w_i = 1 \hbox{ and } \forall i: w_i \geq 0,
\ee
minimizing for lower bounds and maximizing for upper bounds.

Compare these bounds to the bounds from Theorem \ref{inf_bd_thm}. These bounds allow binomial inversion for each $t_i$ and $u_i$, since each $\hat{w}_i$ has a binomial distribution, and these tend to be tighter than Hoeffding bounds and other derived bounds. (Binomial inversion gives sharp bounds.) However, these bounds divide $\delta$ by $2(m + 1)$ rather than $m + 1$, doubling the bound failure probability allocated to each probability bound, because we use $m + 1$ estimated rates of numbers of errors among classifiers to infer error bounds for only $\frac{m+1}{2}$ numbers of voters $v$.

\subsection{Using the Multinomial Distribution -- Future Work}
Together, the $\hat{w}_i$ values are a sample from a multinomial distribution with probabilities $w_i$. We should be able to use this information to get tighter bounds on the $w_i$ values from multinomial bounds, instead of using simultaneous binomial bounds over all $\hat{w}_i$. Let $L(\mathbf{\hat{w}}, \delta)$ be the set of probability vectors $\mathbf{w} = (w_0, \ldots, w_m)$ that have probability at least $1 - \delta$ of generating a sample vector that is more likely than $\mathbf{\hat{w}} = (\hat{w}_0, \ldots, \hat{w}_m)$. Call $L(\mathbf{\hat{w}}, \delta)$ the likely set. (Informally, $\mathbf{w}$ is in the likely set unless $\mathbf{\hat{w}}$ is in the $\delta$-tail of the multinomial distribution generated by probabilities $\mathbf{w}$.) 

To compute an upper bound on out-of-sample error rate for each number of voters $v$, solve:
\be
\max_{\mathbf{w} \in L(\mathbf{\hat{w}}, \delta)} \sum_{i=0}^{m} w_i r(m, i, v).
\ee
(For lower bounds, minimize instead of maximizing.) This produces valid upper and lower bounds for all numbers of voters, with probability at least $1 - \delta$. 

Constraining $\mathbf{w}$ to the likely set based on the multinomial distribution allows validation by inference without using a Bonferroni correction to divide $\delta$ by $2 (m + 1)$ as was required for the constraints in Expression \ref{box_constraints}. This produces tighter constraints, so the resulting bounds are at least as strong.

The challenge lies in computing the error bounds. We want to optimize a linear function over the likely set. But the likely set may have a challenging shape for optimization, because the tails of multinomial distributions are not continuous in $\mathbf{w}$. To see why, suppose that for some $\mathbf{w}$ some other sample vector is equally as likely as $\mathbf{\hat{w}}$. Then perturbing $\mathbf{w}$ can change the ordering of those likelihoods, so that the likelihood of the other sample vector becomes greater than that of $\mathbf{\hat{w}}$, shifting the other sample vector's full probability out of the tail in a discontinuous jump. 

So one goal for future research is to identify a superset of the likely set that is amenable to optimization and does not contain distributions $\mathbf{w}$ outside the likely set that would significantly weaken the resulting error bounds. There are approximations to the likely set that make optimization easier. For example, consider Pearson's $X^2$ statistic \citep{pearson00,read88,cressie84}:
\be
X^2(\mathbf{w}, \mathbf{\hat{w}}) = (m + 1) \sum_{i=0}^{m} \frac{(\hat{w}_i - w_i)^2}{w_i},
\ee
with term $i$ zero if $w_i = \hat{w}_i = 0$. Asymptotically, $X^2$ has a chi-squared distribution with $m$ degrees of freedom. So if we let $C_{\delta}$ be the value for which the cdf of that distribution is $\delta$, then we can define an approximate likely set:
\be
\tilde{L}(\mathbf{\hat{w}}, \delta) \equiv \left\{\mathbf{w} \left| X^2(\mathbf{w}, \mathbf{\hat{w}}) \leq C_{\delta} \right. \right\}.
\ee
This set has smooth boundaries. However, it is not a superset of the likely set.


To identify a suitable superset of the likely set, perhaps we can apply bounds on the difference between $X^2$ (or similar statistics) and the chi-squared distribution \citep{matsunawa77,siotani84,bickel90,taneichi02,gaunt16,ouimet21} to expand an approximate likely set, for example by increasing the constraint $C_{\delta}$ on $X^2$ to ensure that all $\mathbf{w}$ in the likely set are enclosed in the region of the approximate set while keeping the set small enough to yield effective error bounds. It may also be possible to extend bounds on $L_1$ distance between $\mathbf{\hat{w}}$ and $ \mathbf{w}$ \citep{valiant17,balakrishnan18} to derive a superset of the likely set that gives linear constraints instead of a feasible region with curved boundaries. 

Within optimization procedures, computing tail probabilities (the cdf of $\mathbf{\hat{w}}$ given $\mathbf{w}$) directly is infeasible for even moderate-sized ensembles, because the number of ways for $n$ samples to fall into $m + 1$ categories is ${n+m}\choose{m}$. Monte Carlo methods \citep{hope68,jann08} can estimate the tail probability to arbitrary accuracy with arbitrarily high probability. Also, there are evolving methods to cleverly collect terms for exact tail computation \citep{baglivo92,keich06,resin20}.

The problem of identifying a worst likely generator for multinomial samples is an interesting statistical problem, with applications beyond ensemble validation. It is not clear whether approaches based on the multinomial tail can be more effective in practice than approaches that split $\delta$ and perform simultaneous validation over individual categories. (In our case, each number of errors $i$ among the ensemble classifiers is a category.) The simpler approach of simultaneous validation can use sharp binomial inversion bounds within each category, and it can also benefit from the freedom to split $\delta$ non-uniformly. Finally, both approaches could benefit from tuning the constraints to the weights (the $r(m, i, v)$ values in our case) used to evaluate the generating distributions. For example, it is possible to merge categories that have similar weights to reduce the problem's dimensionality. 

\bibliographystyle{model2-names}
\bibliography{bax}

\end{document}